\newenvironment{packed_itemize}{
\vspace{-0.15cm}\begin{itemize}
  \setlength{\itemsep}{1pt}
  \setlength{\parskip}{0pt}
  \setlength{\parsep}{0pt}
}{\end{itemize}}
\def\y{\mathbf{y}}
\def\x{\mathbf{x}}
\def\U{\mathbf{U}}
\def\z{\mathbf{z}}
\def\x{\mathbf{x}}
\def\m{\mathbf{m}}
\newcommand{\Ff}{\mathfrak{F}}
\ifcvprfinal\pagestyle{empty}\fi
\begin{document}

\title{CNNs are Globally Optimal Given Multi-Layer Support}

\author{Chen Huang \quad \quad Chen Kong \quad \quad Simon Lucey\\
Robotics Institute, Carnegie Mellon University\\
{\tt\small \{chenh2,chenk,slucey\}@andrew.cmu.edu}
}

\maketitle

\begin{abstract}
Stochastic Gradient Descent (SGD) is the central workhorse for training modern CNNs. Although giving impressive empirical performance it can be slow to converge. In this paper we explore a novel strategy for training a CNN using an alternation strategy that offers substantial speedups during training. We make the following contributions: (i) replace the ReLU non-linearity within a CNN with positive hard-thresholding, (ii) re-interpret this non-linearity as a binary state vector making the entire CNN linear if the multi-layer support is known, and (iii) demonstrate that under certain conditions a global optima to the CNN can be found through local descent. We then employ a novel alternation strategy
(between weights and support) for CNN training that leads to
substantially faster convergence rates, nice theoretical properties,
and achieving state of the art results across large scale datasets
(e.g. ImageNet) as well as other standard benchmarks. 
\end{abstract}

\section{Introduction}
Most modern Convolutional Neural Networks (CNNs) can be expressed
simply in terms of the composition of an affine transform followed
by a non-linear function\footnote{Recent work by~\cite{SpringenbergDBR14}
  has demonstrated that state of the art CNN performance can be
  attained without max pooling or other non-linearities, resulting in just the use of linear
  affine operations and ReLU within the network.} such
as~$\mbox{ReLU}(\x) = \max(\x,0)$. The output for each layer~$l$ in the network can be
expressed recursively as, 
\begin{equation}
\z^{(l)} = \mbox{ReLU} \{ \Wv^{(l)} \z^{(l-1)} - \betav^{(l)} \} 
\label{Eq:introd}
\end{equation}
where~$\{ \Wv^{(l)}, \betav^{(l)} \}$ are the affine parameters and
~$\z^{(l-1)}$ is the previous layer's output
and~$\z^{(0)} = \x$ being the input
signal. It is well understood that when one applies ReLU to a vector (as in Eq.~\ref{Eq:introd} ) the resulting
output~$\z^{(l)}$ will be sparse such that~$\m^{(l)} \odot \z^{(l)} =
\z^{(l)}$ where~$\odot$ represents the Hadamard product operator. We
shall refer to~$\m^{(l)}$ herein as the support of~$\z^{(l)}$ where the individual
elements are constrained to the binary values~$\{0,1\}$.  Further, we shall refer to the
concatenation of these supports across all~$L$ layers within the CNN
as the multi-layer support.

\begin{table}[t]
\caption{Image classification results of our alternating training algorithm and existing local descent algorithm applied to different network architectures. Our algorithm achieves state-of-the-art results at faster convergence rate, with nice theoretical properties and many applications in deep learning.}
\centering
\resizebox{0.9\linewidth}{!}{
\begin{tabular}{!{\vrule width1pt} c|c|c|c !{\vrule width1pt}}
    \Xhline{1pt}
     Error (\%) & Architecture & Published & Ours \\ \Xhline{1pt}
     MNIST      & \cite{Lin14} & \textbf{0.47} & \textbf{0.47} \\
     CIFAR-10 (w/o aug)  & \cite{SpringenbergDBR14} & \textbf{9.08} & 9.12 \\
     CIFAR-10 (w/ aug) & \cite{SpringenbergDBR14} & 7.25 & \textbf{7.20} \\ 
     CIFAR-100   & \cite{SpringenbergDBR14} & 33.71 & \textbf{33.68} \\ 
     SVHN   & \cite{Lin14} & \textbf{2.35} & 2.36 \\ \Xhline{1pt}
    \end{tabular}
}
\label{Tab:tb1}
\end{table}

\begin{figure}[t]
\begin{center}
\includegraphics[width=0.75\linewidth]{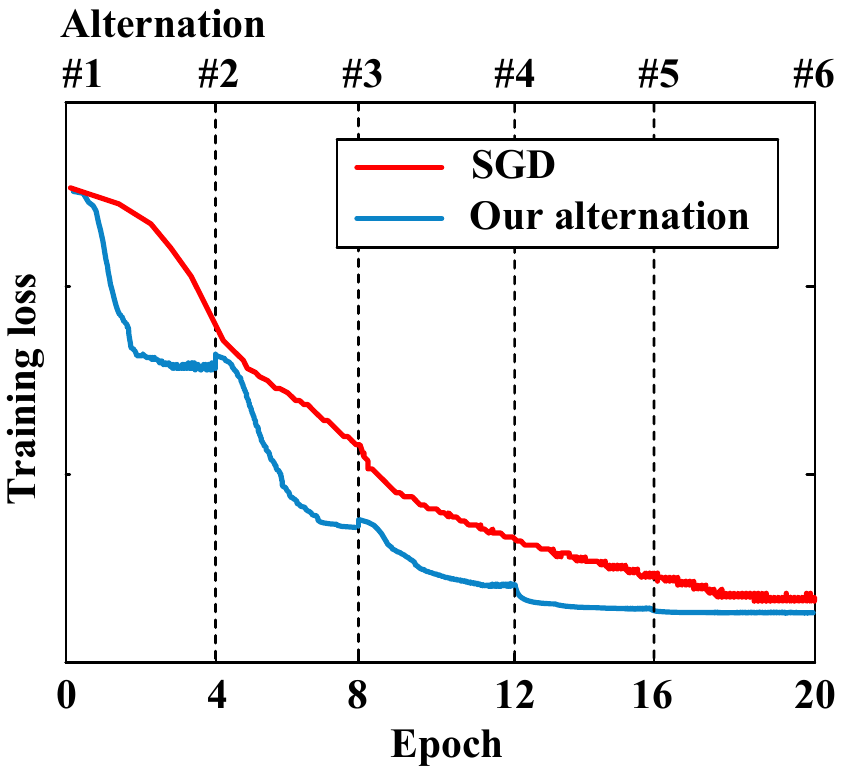}
\end{center}
\caption{Comparison of the training curves of SGD and our alternation strategy on MNIST. The alternation iteration is identified at the corresponding epoch number. Our alternating training converges faster than SGD but with similar performance.}
\label{fig1}
\end{figure}

In this paper we propose a re-interpretation of the 
CNN architecture in Eq.~\ref{Eq:introd} such that the network can now be expressed as, 
\begin{equation}
\z^{(l)} = \mbox{diag}(\m^{(l)}) (\Wv^{(l)} \z^{(l-1)} - \betav^{(l)} )
\label{Eq:ours}
\end{equation}
where the ReLU non-linearity at each layer is instead replaced by a binary
support. Interestingly, the re-interpretation in Eq.~\ref{Eq:ours} 
becomes completely linear if one knows the multi-layer
support for the input signal a priori. Obviously, in practice one cannot determine this multi-layer support without the
multi-layer affine weights - so it seems nothing is gained as this modified network remains just
as non-linear and problematic to optimize as before.

\noindent \textbf{A Case for Alternation:}
Modern CNNs are typically optimized through stochastic gradient
descent (SGD). More efficient
optimization strategies are possible such as quasi-Netwon
methods~\cite{goodfellow16}, however, they are problematic due to their high computational cost and the poor
positive curvature of CNN objective functions in general. Alternation can be intepreted as a kind of block coordinate
descent where the global solution to one factor is found while keeping
the other factors fixed~\cite{damped_newton05}. Although alternation is not
guaranteed of reaching a local minima, the objective will be reduced
after every iteration. As argued by~\cite{damped_newton05}, the convergence of alternation
iterations is initially very good, and has typically low computational
overhead - making it an attractive iteration scheme for many large
scale problems in vision and learning. 

As discussed by~\cite{goodfellow16}, however, alternation is typically
dismissed as a viable optimization strategy for modern CNNs. Strategies like alternation
(and block coordinate descent) make the most sense when factors in the
objective can be separated into groups that play relatively isolated
roles. Such a separation of parameters is problematic when applied to CNNs as
nearly every parameter in the network heavily influences the other. In
this paper we argue that by re-interpreting the ReLU non-linearity as
a binary support parameter - one can
now orchestrate a parameter separation
that makes an alternation strategy attractive over conventional
SGD. Specifically, we advocate for a strategy where we hold the
multi-layer support fixed, and then solve for the multi-layer affine
weights until convergence. We then estimate the support using the updated weights -
iterating the whole alternation process until a good solution the CNN is found. 

A necessary component for our proposed alternation strategy to be
feasible is that an approximate global solution to our CNN's weights can be
determined given the known multi-layer support for each training
example. This may seem at first glance a tall order, as it has been
well documented~\cite{Haeffele_2017_CVPR} that modern CNNs
with some minor exceptions have no guarantees of converging to
a global solution. This brings us to the central insight of our paper
- as well as inspiring our paper's title. Specifically, we demonstrate that we can find the global
solution (through local descent) to the weights of our proposed CNN
architecture assuming known multi-layer support for all the training
examples. This result in isolation is of little practical interest as
one never knows the multi-layer support for an
input signal a priori. However, the result is of considerable interest
when viewed from its applicability for training CNNs within an
efficient alternation framework.

\noindent \textbf{Contributions:}
We make the following contributions:-
\begin{packed_itemize}

\item Demonstrate how one can linearly separate weights from
  support within a modern CNN using a modified ReLU non-linearity. Based on this separation we can then
  demonstrate how the problem of multi-layer affine weight estimation
  can be represented as determining a rank one tensor. Further, even
  though this problem is clearly non-convex we
  characterize under which conditions that a global optima can be found
  using local descent assuming the multi-layer support of the training examples is known (Section~\ref{subsec:weight_opt}).

\item Present empirical results which show the utility of our proposed alternation strategy for CNN
  optimization. Specifically, we demonstrate substantially faster
  convergence results (over traditional SGD) over a number of
  benchmarks while achieving (and in some circumstances surpassing)
  state of the art in actual recognition performance (Section~\ref{subsec:benchmark}).

\item Finally, we discuss computational issues with our approach when attempting
  to learn a CNN on a large scale data (e.g. ImageNet) - as naively
  one would be expected to estimate the multi-layer support for every
  training example. Specifically, we advocate for a novel modification of
  our alternation strategy where a fixed multi-layer support is
  assumed across a mini-batch of the training examples. We demonstrate
  that this strategy can obtain state of the art performance, while
  still enjoying the rapid convergence properties of our original
  naive approach (Section~\ref{subsec:imagenet}).   

\end{packed_itemize}

\section{Related Work}
Deep neural networks have been revolutionary in many fields of machine learning and artificial intelligence, including computer vision, speed
recognition, and natural language processing. Their performance continues to improve with ever deepening network architectures~\cite{he2016deep}.

\noindent
{\bf Optimization:} Modern CNNs still remain difficult to train due to the nonconvexity of the optimization problem. They are typically optimized through SGD which often cannot return a global minima. More efficient optimization strategies exist such as quasi-Netwon methods~\cite{goodfellow16}, which however, incur high computational cost. One recent work in~\cite{papyan17} views the forward pass of a CNN as a thresholding pursuit serving a Multi-Layered Convolutional Sparse Coding (ML-CSC) model. The authors argued that improved results should be attainable by employing a more sophisticated pursuit algorithm. Promising results were attained, however, due to the computational complexity of the proposed pursuit strategy results were limited to small image datasets.


\noindent
{\bf Optimality:} Despite the empirical success of CNNs, the
theoretical underpinnings for this success and their optimality remains elusive. Of
particular note in this regard is the recent work of
Kawaguchi~\cite{kawaguchi2016deep} who studied the optimality of
simplistic deep {\em linear} neural networks. Kawaguchi proved that
any local minimum point is also a global minimum, and that any other
critical point is a saddle point for deep linear
networks. Further,~\cite{NguyenH17,Chulhee17} provided some conditions
for a critical point of the empirical risk function to be a global
minimum for both {\em linear} and {\em nonlinear} networks. Recently,
Haeffele and Vidal~\cite{Haeffele_2017_CVPR} derived sufficient
conditions to guarantee that local minima for their proposed network are globally optimal, requiring both the network output and the regularization to be positively homogeneous, with the regularization being designed to control the network size. However, the results only apply to networks with one hidden layer, or with multiple subnetworks connected in parallel. Our work does not have such restrictive assumptions about networks.

\section{Separating Weights from Supports}
Most modern Convolutional Neural Networks (CNNs) are built with a collection of feature layers followed by a small number of fully connected layers.
Throughout most of the theoretical treatment of this paper we follow~\cite{SpringenbergDBR14} to replace max-pooling by a convolutional layer with increased strides.
This makes the convolutional pooling layer linear, leaving activation layers as the sole nonlinearity in a network.
We can express such a network simply in terms of the concatenation of an affine transform~$(\Wv\xv)$\footnote{A layer within CNNs is said to be convolutional if $\mathbf{W}$ has a banded strided Toeplitz structure, otherwise it is considered fully connected.} followed by a non-linear function~$\eta\{\Wv\xv; \betav \}$ such as~$\mbox{ReLU}(\Wv\xv - \betav)$.
For example a three layer network can be expressed as,
\begin{equation}
\mathcal{F}(\mathbf{x}; \Wv, \betav) = \Wv^{(3)} \eta \{ \Wv^{(2)} \eta \{\Wv^{(1)} \xv ; \betav^{(1)}\};\betav^{(2)} \} - \betav^{(3)}.
\label{Eq:orig}
\end{equation}
The above three layer network can be generalized to~$L$ layers by expressing the~$\ell-$th network output recursively as,
\begin{equation}
\mathbf{z}^{(\ell)} = \eta \{ \Wv^{(\ell)} \zv^{(\ell-1)} ; \betav^{(\ell)} \}
\label{Eq:hidden}
\end{equation}
where~$\zv^{(\ell-1)}$ is the previous layer's output and~$\zv^{(0)} = \xv$. We can therefore define the~$L$ layer network function as,
\begin{equation}
\Fc(\xv; \Wv,\betav) = \Wv^{(L)} \zv^{(L-1)} - \betav^{(L)}
\label{Eq:layerL}
\end{equation}

Such a network function can be trained discriminatively on the following objective function,
\begin{equation}
\arg \min_{\mathbf{W},\betav} \sum_{n=1}^{N}
\mathcal{L}\{\mathbf{y}_{n}, \mathcal{F}(\mathbf{x}_{n}; \mathbf{W}, \betav) \}
\label{Eq:obj}
\end{equation}
where~$N$ defines the number of training examples,~$\mathbf{y}_{n}$ and~$\mathbf{x}_{n}$ are the~$n$-th output label vector and input vector respectively.
Numerous loss functions~$\mathcal{L}$ can be employed depending on the nature of task ranging from
cross-entropy~\cite{deng2006cross} to least-squares error.

\begin{figure}[t]
\begin{center}
\includegraphics[width=1.0\linewidth]{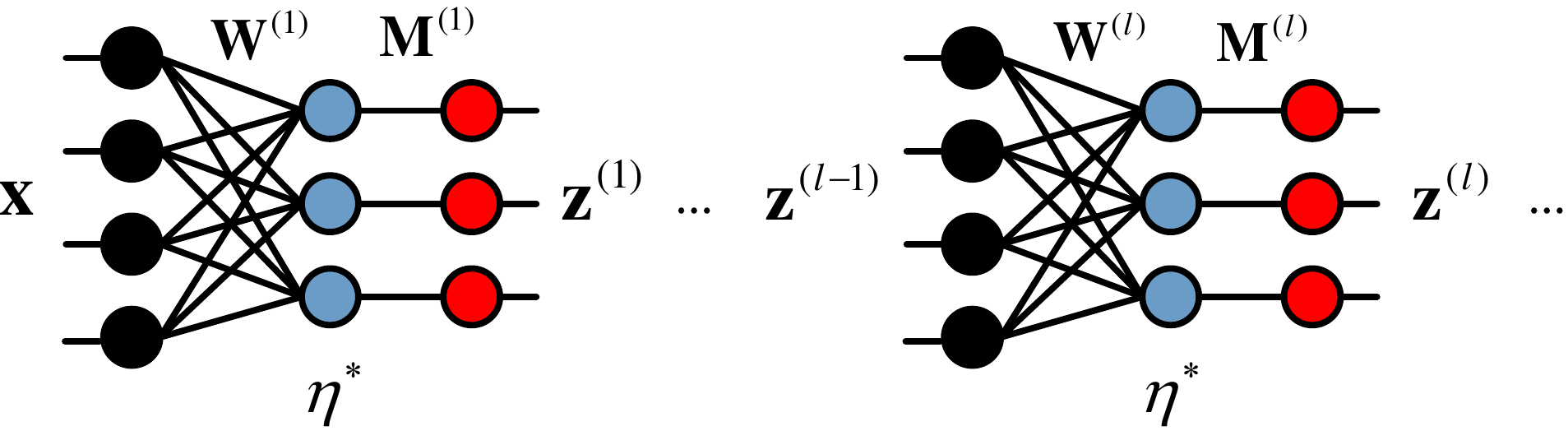}
\end{center}
\caption{CNN reinterpreted as weights (produce real-valued units, blue) and support (produce binary masks, red, via positive hard-thresholding $\eta^{*}$). This naturally calls for an alternation strategy between weights and support for CNN training.}
\label{fig2}
\end{figure}

\subsection{Positive Hard-Thresholding as a Non-linearity}
As previously discussed, a popular non-linearity within CNN literature at the moment is the
Rectified Liner Unit (ReLU) defined as,
\begin{eqnarray}
\eta\{ q ; \beta\} & = & \max(q - \beta, 0) \\
& = & \bigg \{ \begin{tabular}{cc} $q - \beta$, & if $q > \beta$ \\
0, & otherwise \end{tabular} \nonumber \;\;,
\end{eqnarray}
where scalar $q$ is the input and $\beta$ is a learnable threshold/bias.
With ReLU applied independently across elements of the input vector, the output of the CNN at layer~$\ell$ in Equation~\ref{Eq:hidden} can be rewritten as
\begin{eqnarray}
    \zv^{(\ell)}  =  \eta\{ \Wv^{(\ell)}\zv^{(\ell-1)} ; \betav^{(\ell)} \}.
\end{eqnarray}

One drawback of ReLU is that the output mixes the input with the threshold/bias complicating the separation of weights from support. To facilitate this separation we advocate to instead use a positive hard thresholding operator to replace ReLU within a CNN. Positive hard-thresholding is defined as
\begin{equation}
    \eta^{*} \{q; \beta\} = \begin{cases}
        q & \text{if}~q > \beta\\
        0 & \text{otherwise}.
        \end{cases}
    \label{Eq:hard}
\end{equation}
Positive hard-thresholding not only facilitates the decoupling of the weights from support, but we also show in our experiments section that it has no discernable empirical drawbacks compared to ReLU and even improves performance in some circumstances.

There is also a theoretical motivation for this change in non-linearity. Papyan \etal~\cite{papyan17} recently demonstrated that the non-linearity ReLU can be viewed as the closed form solution to the following non-non-negative $\ell_1$ minimization:
\begin{equation}
    \eta\{ q ; \beta \} = \argmin_{p}~\Vert q - p\Vert_2^2 + 2\beta \Vert p\Vert_1, ~ \st p > 0.
    \label{Eq:sparse}
\end{equation}
Taking the view that the~$\ell_{1}$ norm term in Equation~\ref{Eq:sparse} is a convex approximation to the sparisty inducing~$\ell_{0}$ norm one could rewrite the objective as,   
\begin{equation}
    \eta^{*}\{ q ; \beta \} = \argmin_{p}~\Vert q - p\Vert_2^2 + 2\beta \Vert p\Vert_0, ~ \st p > 0.
\end{equation}
Papyan~\etal proposed to potentially use the closed form solution to this $\ell_0$ problem as an alternative non-linearity to ReLU within a CNN.
This closed form solution is the positive hard-thresholding non-linearity defined in Equation~\ref{Eq:hard}.

Armed now with the positive hard-thresholding non-linearity, we can now reinterpret a CNN in a new manner. Specifically for each layer of the network we can form the indicator binary support vector
\begin{equation}
    \mathbf{m}^{(\ell)} = [(\scalebox{0.8}{$q_1^{(\ell)} > \beta_1^{(\ell)}$}), \cdots, (\scalebox{0.8}{$q_M^{(\ell)} > \beta_M^{(\ell)}$})]^T
\end{equation}
to replace the positive hard-thresholding non-linearity. For example, one could rewrite the three layered network described earlier
in Equation~\ref{Eq:orig} as,
\begin{equation}
\mathcal{F}(\mathbf{x}; \mathbf{W},\betav) = \mathbf{W}^{(3)}
\Mv^{(2)} \Wv^{(2)} \Mv^{(1)}\Wv^{(1)} \xv - \betav^{(3)}.
\label{Eq:newthree}
\end{equation}
where~$\mathbf{M}^{(\ell)} = \diag(\mathbf{m}^{(\ell)})$.

\subsection{Decoupling Weights and Masks}
Let us consider the output of the first layer in a CNN $\zv^{(1)} = \Mv^{(1)}\Wv^{(1)}\xv.$
Vectorizing both sides gives,
\begin{equation}
    \zv^{(1)} = \Uv^{(1)}\vv^{(1)},
    \label{Eq:layer1}
\end{equation}
where $\Uv^{(1)} = \xv\otimes\Mv^{(1)}$, $\vv^{(1)} = \vec(\Wv^{(1)})$, and the operator $\otimes$ denotes a kronecker product.
This now shows a possibile strategy for  separating the weights from the binary support in the first layer.

Now, we demonstrate that this separation also holds for a multi-layer CNN.
Suppose the output of $\ell$-th layer holds the separation, that is $\zv^{(\ell)} = \Uv^{(\ell)}\vv^{(\ell)}.$
The output of the $(\ell+1)$-th layer is therefore,
\begin{eqnarray}
    \zv^{(\ell+1)} &= &\Mv^{(\ell+1)}\Wv^{(\ell+1)}\zv^{(\ell)}\\
                   &= &\Mv^{(\ell+1)}\Wv^{(\ell+1)}\Uv^{(\ell)}\vv^{(\ell)}. \nonumber
\end{eqnarray}
By applying vectorization sequentially one can derive,
\begin{equation}
    \zv^{(\ell+1)} = \Uv^{(\ell+1)} \vv^{(\ell+1)},
\end{equation}
where
\begin{eqnarray}
    \Uv^{(\ell+1)} = \vec\big(\Uv^{(\ell)}\big)\otimes \Mv^{(\ell+1)}\\
    \vv^{(\ell+1)} = \vv^{(\ell)}\otimes \vec\big(\Wv^{(\ell+1)}\big).
\end{eqnarray}

By mathematical induction, it is implied that a multi-layer CNN can always be separated into a bilinear multiplication of $\Uv$ and $\vv$, where $\Uv$ depends on the input examples and the binary support while $\vv$ only depends on the weights $\Wv$.
Formally, by setting $\Mv^{(L)} = \diag(\one)$ and~$\betav = \mathbf{0}$, the $L$ layer network function in Equation~\ref{Eq:layerL} can be rewritten as
\begin{equation}
    \Fc(\xv; \Wv, \Mv) = \Uv^{(L)} \vv^{(L)}.
\end{equation}
One can set $\Mv^{(L)} = \diag(\one)$ as the non-linearity is not applied to the last layer, additionally one can set~$\betav^{(L)} = \mathbf{0}$ as the bias can easily be ignored (for recognition tasks) or estimated independently (e.g. least-squares loss). 

From this perspective, the objective function in Equation~\ref{Eq:obj} can be expressed as a bilinear objective, specifically
\begin{equation}
    \sum_{n=1}^{N} \mathcal{L}\{\yv_n, \Uv_n^{(L)}\vv^{(L)} \}.
    \label{Eq:obj1}
\end{equation}

\subsection{Recovery of Weights Given Support}
\label{subsec:weight_opt}
Given the bilinear form of the objective function, we now characterize the optimality of recovering weights, assuming the masks are given from oracle.
To illustrate the basic idea, we first restrict ourselves to a two-layer neural network:
\begin{equation}
    \Fc^{(2)}(\xv_n; \Wv) = \Uv_n^{(2)} \vv^{(2)}
\end{equation}
using a least-squares loss function~$\mathcal{L}(\y,\x) = ||\y - \x||_{2}^{2}$. 
Suppose that we have a set of training samples $\{(\xv_n, \yv_n)\}_{n=1,..., N}$.
Then the problem of estimating $\Wv$ given $\Mv$ is formally defined as the problem (P1):
\begin{equation}
    (P1): \quad \min_{\Wv^{(1)}, \Wv^{(2)}} \sum_{n=1}^N \Vert \yv_n - \Fc^{(2)}(\xv_n;\Wv) \Vert_2^2.
\end{equation}
For the sake of brevity, we define:
\begin{equation}
    \yv = \begin{bmatrix} \yv_1 \\ \vdots \\ \yv_N \end{bmatrix}, \quad
    \Uv = \begin{bmatrix} \Uv^{(2)}_1 \\ \vdots \\ \Uv^{(2)}_N \end{bmatrix}.
\end{equation}
Therefore, the problem $(P1)$ can be simplified as 
\begin{equation}
    (P2): \quad \min_{\Wv^{(1)}, \Wv^{(2)}} \Vert \yv - \Uv \vv^{(2)} \Vert_2^2.
\end{equation}

Further, inspired by the work~\cite{bhojanapalli2016global}, we define a lifting function $\Ff$ as a linear operator satisfying
\begin{equation}
    \Ff(\pv\qv^T) = \pv \otimes \qv,\quad \text{for any vectors}~\pv, \qv.
\end{equation}

By imposing the lifting function to the problem $(P2)$, one can rewrite the objective function as
\begin{equation}
    \Big\Vert \yv -  \U\Ff\big(\vec(\Wv^{(1)})\vec(\Wv^{(2)})^T\big)\Big\Vert_2^2
\end{equation}

By defining $\Vv = \vec(\Wv^{(1)})\vec(\Wv^{(2)})^T$, we know $\rank(\Vv) = 1$.
Thus, the problem $(P2)$ can be equivalently rewritten to
\begin{equation}
    \begin{aligned}
        (P3): \quad &\min_{\Vv} \Vert \yv - \Uv~\Ff(\Vv) \Vert_2^2\\
                    &\st~\rank(\Vv) = 1.
    \end{aligned}
\end{equation}
If a solution to $(P3)$ can be found, we then can utilize Singular Value Decomposition to extract $\Wv^{(1)}, \Wv^{(2)}$ from $\Vv$:
\begin{equation}
    \vec(\Wv^{(1)}) = \sqrt{\sigma^*}\uv^{*}, \quad \vec(\Wv^{(2)}) = \sqrt{\sigma^*}\vv^{*}
\end{equation}
where $\Vv = \sigma^*\uv^*(\vv^*)^T$.

\begin{theorem}
    \label{thm:opt}
    The globally optimal solution to $\Vv$ in the problem $(P3)$ can be achieved by stochastic gradient descent with high possibility if there exists a positive number $0< \epsilon < 1/10$, such that the matrix $\Uv$ satisfies
    \begin{equation}
        (\frac{9}{10}+\epsilon)m \le \lambda_{min}^2 \le \lambda_{max}^2 \le (\frac{11}{10}-\epsilon) m,
        \label{eq: bounds}
    \end{equation}
    where $\lambda_{min}, \lambda_{max}$ are the smallest and largest eigenvalue of $\Uv$, and $m$ is the size of $\yv$.
\end{theorem}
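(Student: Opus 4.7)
The plan is to view (P3) as a rank-$1$ matrix sensing problem whose sensing operator $\Vv \mapsto \Uv\Ff(\Vv)$ inherits a Restricted Isometry Property (RIP) from the eigenvalue hypothesis (\ref{eq: bounds}). First I would observe that for any rank-$1$ matrix $\Vv = \pv\qv^T$, the identity $\|\Ff(\Vv)\|_2 = \|\pv \otimes \qv\|_2 = \|\pv\|_2\,\|\qv\|_2 = \|\Vv\|_F$ together with (\ref{eq: bounds}) yields
$$(\tfrac{9}{10}+\epsilon)\,m\,\|\Vv\|_F^2 \;\le\; \|\Uv\,\Ff(\Vv)\|_2^2 \;\le\; (\tfrac{11}{10}-\epsilon)\,m\,\|\Vv\|_F^2,$$
so the normalised operator $\Uv/\sqrt{m}$ satisfies rank-$1$ RIP with constant $\delta \le 1/10 - \epsilon$. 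Since the hypothesis bounds \emph{all} eigendirections of $\Uv$ uniformly, the same inequality persists on rank-$2$ perturbations, which is the regime needed for Hessian analysis at critical points.

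Second, I would dispense with the rank constraint by the factorisation $\Vv = \pv\qv^T$ with $\pv = \vec(\Wv^{(1)})$, $\qv = \vec(\Wv^{(2)})$, and analyse the unconstrained balanced loss
$$g(\pv,\qv) \;=\; \|\yv - \Uv(\pv \otimes \qv)\|_2^2 \;+\; \tfrac{\mu}{4}\,\bigl\|\pv\pv^T - \qv\qv^T\bigr\|_F^2,$$
whose second term is the standard regulariser that kills the $(\pv,\qv)\mapsto(c\pv,\qv/c)$ scaling ambiguity inherent to asymmetric low-rank recovery without changing the optimal value. Following the matrix-sensing argument of Bhojanapalli~\etal~\cite{bhojanapalli2016global} (and its asymmetric extensions), the rank-$2$ RIP with $\delta < 1/10$ should deliver a benign landscape for $g$: every local minimiser is globally optimal, and every other critical point is a \emph{strict saddle} whose Hessian has a strictly negative eigenvalue. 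The key geometric step is, at a non-optimal critical point, to construct a rank-$2$ perturbation along the line joining $\pv\qv^T$ to the target minimiser and to use the RIP to show that the Hessian quadratic form in that direction is strictly negative; the balancing regulariser is what lets the $\pv$- and $\qv$-curvature contributions be compared on equal footing.

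Finally, having the strict-saddle property in hand, I would invoke the standard escape-from-saddle theorem for stochastic gradient descent on smooth strict-saddle functions: from a random initialisation and an appropriately decaying step size, SGD avoids all strict saddles almost surely and converges with high probability to a local minimum, which by the landscape result is global. The weights $\Wv^{(1)},\Wv^{(2)}$ are then extracted from the resulting rank-$1$ $\Vv$ by the SVD formulas stated just before the theorem. The main obstacle is the landscape analysis in step two: unlike the symmetric PSD setting of~\cite{bhojanapalli2016global}, the factors $\pv$ and $\qv$ here live in different spaces and the Hessian has a degenerate direction along the scaling ambiguity, so the argument depends crucially on the balancing term. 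Bounding the mixed curvature contributions against the restricted strong convexity supplied by the data-fit term, while showing that the $1/10-\epsilon$ slack in the RIP constant is enough to absorb them, is the technically delicate part of the proof.
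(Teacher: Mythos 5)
Your proposal follows essentially the same route as the paper's proof: both derive a rank-two RIP with constant $\tfrac{1}{10}-\epsilon$ for the operator $\Vv \mapsto \Uv\,\Ff(\Vv)$ from the eigenvalue hypothesis (using that $\Ff$ acts as an isometry, $\Vert\Ff(\Vv)\Vert_2 = \Vert\Vv\Vert_F$), and then appeal to the matrix-sensing result of Bhojanapalli et al.~\cite{bhojanapalli2016global} to conclude that SGD finds the global optimum with high probability. The only difference is one of detail: where the paper simply cites Theorem 3.1 of~\cite{bhojanapalli2016global}, you unpack its content (factorized loss, balancing regularizer for the asymmetric factors $\vec(\Wv^{(1)})$ and $\vec(\Wv^{(2)})$, strict-saddle escape by SGD), which makes explicit the asymmetric-factorization subtlety that the paper's direct citation glosses over, but it is not a different approach.
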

\begin{proof}
    From the property of eigenvalues, it is implies that, under certain conditions, for any vector $\bv$, it is true that
    \begin{equation}
        \lambda_{min}^2 \Vert \bv \Vert_2^2 \le \Vert \Uv \bv \Vert_2^2 \le \lambda_{max}^2 \Vert \bv \Vert_2^2.
    \end{equation}
    By applying the bounds in Equation~\ref{eq: bounds}, it is implied that
    \begin{equation}
        (\frac{9}{10}+\epsilon)\Vert \bv \Vert_2^2 \le \frac{1}{m}\Vert \Uv \bv \Vert_2^2 \le (\frac{11}{10}-\epsilon) \Vert \bv \Vert_2^2.
    \end{equation}
    Since $\Ff(\Vv)$ is a vector, the following must holds:
    \begin{equation}
        (\frac{9}{10} +\epsilon) \Vert \Ff(\Vv) \Vert_2^2 \le \frac{1}{m}\Vert \Uv \Ff(\Vv) \Vert_2^2 \le (\frac{11}{10}-\epsilon) \Vert \Ff(\Vv) \Vert_2^2.
    \end{equation}
    Because one possible construction of $\Ff$ is
    \begin{equation}
        \Ff(\Vv) = \vec(\Vv^T),
    \end{equation}
    one can see $\Vert\Ff(\Vv)\Vert_2^2 = \Vert \Vv \Vert_F^2$.
    Then
    \begin{equation}
        (\frac{9}{10}+\epsilon) \Vert \Vv \Vert_F^2 \le \frac{1}{m}\Vert \Uv \Ff(\Vv) \Vert_2^2 \le (\frac{11}{10}-\epsilon) \Vert \Vv \Vert_F^2.
    \end{equation}
    By defining a linear operation $\Ac(\Vv) = \Uv\Ff(\Vv)$, we have
    \begin{equation}
        (\frac{9}{10}+\epsilon) \Vert \Vv \Vert_F^2 \le \frac{1}{m}\Vert \Ac(\Vv) \Vert_2^2 \le (\frac{11}{10} -\epsilon) \Vert \Vv \Vert_F^2.
    \end{equation}
    This condition characterizes that the operator $\Ac$ satisfies the RIP condition with $(2, \frac{1}{10}-\epsilon)$.
    By the Theorem 3.1 in~\cite{bhojanapalli2016global}, a global optimal solution $\Vv^*$ will be discovered by stochastic gradient descent with high possibility.
\end{proof}

Note that even though we show a condition where global optimality occurs when we are getting to have local optimality, we cannot guarantee this in polynomial time. This is also pointed by Vidal~\etal\cite{Haeffele_2017_CVPR}.
More generally, Theorem~\ref{thm:opt} shall be generalized to multi-layer CNNs analogously by using multi-dimensional tensors, which is omitted here due to the sake of brevity. After we recover high-quality weights with the supports fixed, we can then estimate supports using the updated weights - iterating the whole alternation process (like block coordinate descent) until a good solution the CNN is found. 

\section{Experimental Results}

We start with evaluating our alternation strategy on four standard image classification benchmarks: CIFAR-10~\cite{Krizhevsky09}, CIFAR-100~\cite{Krizhevsky09}, MNIST~\cite{Lecun98gradient} and SVHN~\cite{Netzer11}. Two network architectures are employed - the All Convolutional Network (ALL-CNN)~\cite{SpringenbergDBR14} where ReLU is the only nonlinearity, and Network in Network (NIN)~\cite{Lin14} with nonlinearities beyond ReLU (\ie,~max pooling). We derive a method to estimate the multi-layer support for every training example from the considered small datasets. The resulting alternating training algorithm is found to often converge to a better local minima, even if ReLU is not the sole nonlinearity. Our better results also come independent of network architectures and capacities, as well as different datasets. More importantly, we demonstrate much faster convergence properties of our algorithm. Lastly, we propose a batch-based modification of our alternation algorithm to scale up to the large ImageNet dataset~\cite{NIPS2012_4824}. We show that by sharing a fixed multi-layer support across a batch of examples, our alternation algorithm is computationally more friendly than the naive per-example approach. We experimented with several popular deep and highly nonlinear networks (\eg,~GoogLeNet~\cite{Szegedy15}), achieving state-of-the-art performance for all networks. We performed all experiments using the \texttt{Caffe} framework.

\subsection{Image Classification on Small Benchmarks}
\label{subsec:benchmark}

On small datasets, one simple way to implement our alternation algorithm is to estimate the support mask for every training instance during alternation. We call this approach as ``I-alternation'' where the instance-wise mask can be stored or even cached online for small data. The closed-form solution to the mask can be obtained via solving the reconstruction problem:
\begin{equation}
    \min_{\m}~\Vert \x - \m \odot \x \Vert_2^2 + \beta \Vert \m\Vert_0, ~ m_i \in \{0,1\},
\end{equation}
where the least-square error is used (the solution can generalize to the cross-entropy or other loss). Obviously this objective function is separable with respect to each element $m_i$ in the mask $\m$. Given $m_i$ as a binary mask, the objective function of each subproblem is
\begin{equation}
    (x_i-m_i x_i)^2 + \beta m_i = \begin{cases}
        \beta & \text{if} \; m_i=1\\
        x_i^2 & \text{if} \; m_i=0.
        \end{cases}
\end{equation}

Therefore, the optimal mask solution is:
\begin{equation}
    m_i^* = \begin{cases}
        1 & \text{if}~x_i^2 > \beta\\
        0 & \text{otherwise}.
        \end{cases}
\end{equation}

We first evaluate this naive approach on the small CIFAR-10 dataset. The ALL-CNN network~\cite{SpringenbergDBR14} is adopted, which only consists of convolutional layers with ReLU nonlinearities and softmax layer. Such a linearized architecture provides a perfect testbed to specially evaluate the capability of our ReLU-induced alternation algorithm. We follow the detailed training settings of~\cite{SpringenbergDBR14}, and similarly use the models A, B and C with increasing network capacities. For testing, as is common practice we apply ReLU to each testing example. To predict image classes, we follow~\cite{SpringenbergDBR14} to produce 10 outputs for different positions at the top $1\times 1$ convolutional layer, and simply average them over the whole image before computing softmax probabilities.

\begin{table}[t]
\caption{Baseline comparisons in terms of classification error and the number of network parameters on CIFAR-10 dataset.}
\centering
\resizebox{0.8\linewidth}{!}{
\begin{tabular}{!{\vrule width1pt} c !{\vrule width1pt} c|c !{\vrule width1pt}}
    \Xhline{1pt}
       Model & Error (\%) &  \# params \\ \Xhline{1pt}
       ALL-CNN-A~\cite{SpringenbergDBR14} & 10.30 & \multirow{3}{*}{1.28M} \\ 
       Our I-alternation & \textbf{10.26} & \\
       Our I-alternation+MaskT & 14.35 & \\\Xhline{1pt}
       ALL-CNN-B~\cite{SpringenbergDBR14} & \textbf{9.10} & \multirow{3}{*}{1.35M} \\ 
       Our I-alternation & 9.13 & \\
       Our I-alternation+MaskT & 13.25 & \\\Xhline{1pt}
       ALL-CNN-C~\cite{SpringenbergDBR14} & \textbf{9.08} & \multirow{3}{*}{1.40M} \\ 
       Our I-alternation & 9.12 & \\
       Our I-alternation+MaskT & 13.73 & \\\Xhline{1pt}
    \end{tabular}
}
\label{tb2}
\end{table}

Table~\ref{tb2} conducts an in-depth study of our various baselines without data augmentation. The classification error as well as the number of network parameters are compared. The following observations can be made from the table:
\vspace{-0.1em}
\begin{packed_itemize}

\item Our simple ``I-alternation'' approach already shows successful convergence with competitive performance with respect to the baseline~\cite{SpringenbergDBR14}. Such empirical results are consistent for various network capacities. On the other hand, our approach converges much faster than the SGD baseline (shown later).

\item The ``I-alternation+MaskT'' baseline only applies the binary mask rather than ReLU activations for testing. Not surprisingly the performance drops dramatically, but is not too unacceptable. This suggests the learned mask preserves some discrimination ability for classification, while also confirming the value of standard testing procedure.

\end{packed_itemize}
\vspace{-0.3em}

\begin{table}[!t]
\caption{Comparisons of CIFAR-10 classification error and the number of network parameters.}
\centering
\resizebox{0.85\linewidth}{!}{
\begin{tabular}{!{\vrule width1pt} c !{\vrule width1pt} c|c !{\vrule width1pt}}
    \Xhline{1pt}
       Method & Error (\%) &  \# params \\ \Xhline{1pt}
       \multicolumn{3}{!{\vrule width1pt} c !{\vrule width1pt}}{Without data augmentation} \\ \Xhline{1pt}
       Maxout~\cite{Goodfellow13} & 11.68 & $>6$M \\
       Network in Network~\cite{Lin14} & 10.41 & 1M \\
       Deeply Supervised~\cite{Lee2014} & 9.69 & 1M \\
       ALL-CNN-C~\cite{SpringenbergDBR14} & \textbf{9.08} & 1.3M \\
       ALL-CNN-C (I-alternation) & 9.12 & 1.3M \\ \Xhline{1pt}
       \multicolumn{3}{!{\vrule width1pt} c !{\vrule width1pt}}{With data augmentation} \\ \Xhline{1pt}
       Maxout~\cite{Goodfellow13} & 9.38 & $>6$M \\
       DropConnect~\cite{icml2013_wan13} & 9.32 & - \\
       dasNet~\cite{Stollenga2014} & 9.22 & $>6$M \\
       Network in Network~\cite{Lin14} & 8.81 & 1M \\
       Deeply Supervised~\cite{Lee2014} & 7.97 & 1M \\
       ALL-CNN-C~\cite{SpringenbergDBR14} & 7.25 & 1.3M \\
       ALL-CNN-C (I-alternation) & \textbf{7.20} & 1.3M \\ \Xhline{1pt}
       
    \end{tabular}
}
\label{tb3}
\end{table}

\begin{table}[!t]
\caption{Comparisons of CIFAR-100 classification error.}
\centering
\resizebox{0.75\linewidth}{!}{
\begin{tabular}{!{\vrule width1pt} c !{\vrule width1pt} c !{\vrule width1pt}}
    \Xhline{1pt}
       Method & Error (\%) \\ \Xhline{1pt}
       CNN+tree prior~\cite{Srivastava2013} & 36.85 \\
       Network in Network~\cite{Lin14} & 35.68 \\
       Deeply Supervised~\cite{Lee2014} & 34.57 \\
       Maxout (larger)~\cite{Goodfellow13} & 34.54 \\
       dasNet~\cite{Stollenga2014} & 33.78 \\
       ALL-CNN-C~\cite{SpringenbergDBR14} & 33.71 \\
       Fractional Pooling (1 test)~\cite{Graham14a} & \textbf{31.45} \\
       Fractional Pooling (12 tests)~\cite{Graham14a} & \textbf{26.39} \\
       ALL-CNN-C (I-alternation) & 33.68 \\ \Xhline{1pt}
       
    \end{tabular}
}
\label{tb4}
\end{table}

\begin{table}[!t]
\caption{Comparisons of MNIST classification error.}
\centering
\resizebox{0.75\linewidth}{!}{
\begin{tabular}{!{\vrule width1pt} c !{\vrule width1pt} c !{\vrule width1pt}}
    \Xhline{1pt}
       Method & Error (\%) \\ \Xhline{1pt}
       2-layer CNN + 2-layer NN~\cite{Matthew13} & 0.53 \\
       Stochastic Pooling~\cite{Matthew13} & 0.47 \\
       NIN + Dropout~\cite{Lin14} & 0.47 \\
       ConvMaxout + Dropout~\cite{Goodfellow13} & \textbf{0.45} \\
       NIN + Dropout (I-alternation) & 0.47 \\ \Xhline{1pt}
       
    \end{tabular}
}
\label{tb5}
\end{table}

\begin{table}[!t]
\caption{Comparisons of SVHN classification error.}
\centering
\resizebox{1.0\linewidth}{!}{
\begin{tabular}{!{\vrule width1pt} c !{\vrule width1pt} c !{\vrule width1pt}}
    \Xhline{1pt}
       Method & Error (\%) \\ \Xhline{1pt}
       Stochastic Pooling~\cite{Matthew13} & 2.80 \\
       Rectifier + Dropout~\cite{Srivastava14a} & 2.78 \\
       Rectifier + Dropout + Synthetic Translation~\cite{Srivastava14a} & 2.68 \\
       ConvMaxout + Dropout~\cite{Goodfellow13} & 2.47 \\
       NIN + Dropout~\cite{Lin14} & 2.35 \\
       Multi-digit Number Recognition~\cite{Goodfellow42241} & 2.16 \\
       DropConnect~\cite{icml2013_wan13} & \textbf{1.94} \\
       NIN + Dropout (I-alternation) & 2.36 \\ \Xhline{1pt}
       
    \end{tabular}
}
\label{tb6}
\end{table}

\noindent
{\bf CIFAR-10 and CIFAR-100:} Table~\ref{tb3} and Table~\ref{tb4} further compare our I-alternation approach using the ALL-CNN-C network~\cite{SpringenbergDBR14} with other state-of-the-art methods on CIFAR-10 and CIFAR-100. On CIFAR-10, for cases with and without data augmentation, our method performs on par with or surpasses prior arts. On CIFAR-100, our approach achieves competitive performance as well. Note we use a much smaller ALL-CNN-C network (1.4M parameters) than the top performing Fractional Pooling network~\cite{Graham14a} (50M parameters) thus we train much faster.

\noindent
{\bf MNIST and SVHN:} On the two datasets, we adopt the NIN architecture~\cite{Lin14} with nonlinearities of both ReLU and max pooling. Therefore, the experiments here can be regarded as a generalization test of our I-alternation algorithm to different network architectures and their nonlinearities beyond ReLU. We follow the training hyper-parameters of~\cite{Lin14}, and use the same data preprocessing and splitting for both datasets. Table~\ref{tb5} and Table~\ref{tb6} compare our results with previous methods that do not augment data. Our method performs comparable to the state of the arts again, even though they either use more complicated training schemes or network architectures. This shows the efficacy of our alternation algorithm which is consistent across networks and datasets. We claim that our algorithm can be applied to more complex networks to compare with more recent methods that have even stronger performance on these datasets. The experiments here mainly act as a validation test of our alternation algorithm's efficacy.

\begin{figure*}[t]
\begin{center}
\includegraphics[width=1.0\linewidth]{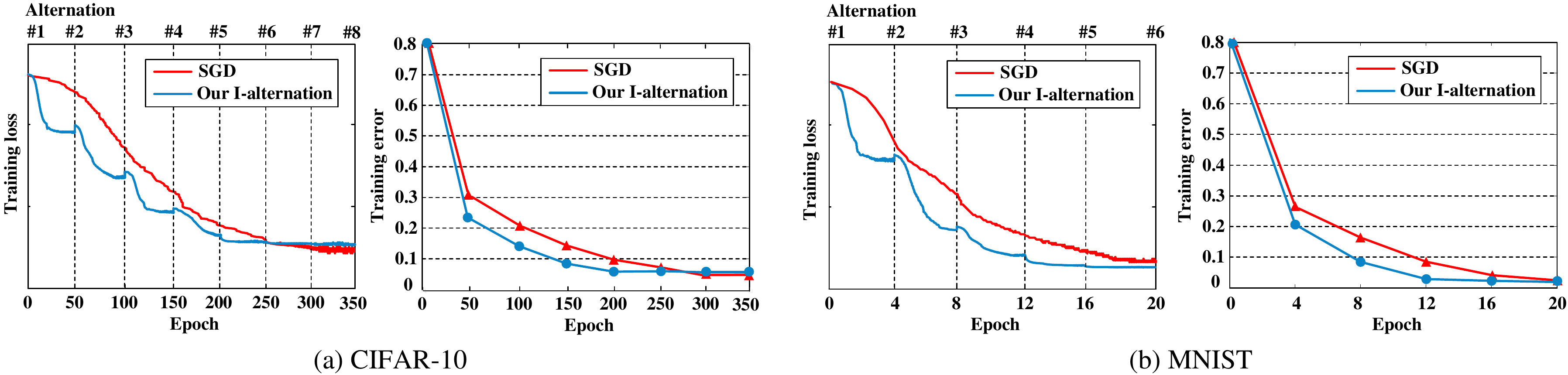}
\end{center}
\caption{Training curves of SGD and our alternation strategy (with instance-wise mask) on CIFAR-10 (without augmentation) and MNIST datasets. Our alternation iteration is identified at the corresponding epoch number. It can be observed that our alternating training reaches similar performance to SGD in 30\%-40\% fewer iterations.}
\label{fig3}
\end{figure*}

\noindent
{\bf Analysis of convergence speed:} One good property of our alternation strategy is that its strong performance comes with a consistent boost in convergence speed. Fig.~\ref{fig3} compares the training curves (in terms of loss and error rate) of traditional SGD and our I-alternation algorithm on two example datasets of CIFAR-10 and MNIST (using different networks).

Note the alternation iteration is identified at the corresponding epoch number, whose gap ranges from 4 to 50 epochs. This gap is approximately determined by the convergence speed of SGD while keeping the mask fixed - when the loss no longer decreases on the considered dataset, we use the passed epochs as the gap and turn to estimate a new mask from the updated weights. After seeing one whole epoch of training examples, we store their individual masks which are then used for the following SGD training. For networks with randomly initialized weights, the generated masks will be meaningless at the beginning of alternation. We run regular SGD for few epochs to obtain reasonable masks to start the whole alternation process. The bias $\beta$ is also initialized this way, and we found good performance can be generally achieved.

It can be observed from Fig.~\ref{fig3} that our I-alternation algorithm converges much faster than conventional SGD. The training loss sometimes increases abruptly at the time of alternation due to the mismatch between outdated mask and updated weights. But overall, our I-alternation algorithm reaches similar performance to SGD in 30\%-40\% fewer iterations. Note when masks are known for all training examples, SGD have been proved to be able to find a global optima under certain conditions, but we can still choose more efficient solvers (\eg~advanced Augmented Lagrange methods or SVD) in the future.

\subsection{Generalization to Large-Scale Dataset}
\label{subsec:imagenet}

We finally discuss computational issues with our I-alternation approach when scaling up to large-scale data (\eg~ImageNet~\cite{NIPS2012_4824}). Specifically, we claim that estimating instance-wise support is storage and computationally demanding for many large scale problems in vision and learning. To this end, we propose a batch-based modification of our ``I-alternation'' algorithm, called ``B-alternation'' where a fixed multi-layer support is shared across a mini-batch of training examples. The closed-form solution to such batch-wise mask can be obtained via solving the problem:
\begin{equation}
    \min_{\m}~\sum_{n=1}^N \Vert \x^{(n)} - \m \odot \x^{(n)} \Vert_2^2 + \beta \Vert \m\Vert_0, ~ m_i \in \{0,1\},
\end{equation}
where $\x^{(n)}$ is one example from the $N$-sized batch. Given $m_i$ as a binary mask in $\m$, the objective function of the corresponding subproblem is
\begin{equation}
    \sum_{n=1}^N (x_i^{(n)}-m_i x_i^{(n)})^2 + \beta m_i = \begin{cases}
        \beta & \text{if} \; m_i=1\\
        \sum_{n=1}^N (x^{(n)})^2 & \text{if} \; m_i=0.
        \end{cases}
\end{equation}

Therefore, the optimal mask solution is:
\begin{equation}
    m_i^* = \begin{cases}
        1 & \text{if}~\sum_{n=1}^N (x^{(n)})^2 > \beta\\
        0 & \text{otherwise}.
        \end{cases}
\end{equation}

In this way, we generate and apply batch-wise masks on-the-fly during training. Table~\ref{tb7} first examines this B-alternation algorithm on small datasets, and finds no accuracy loss compared to the original I-alternation algorithm. The B-alternation algorithm is computationally more friendly as well, which plays a more important role for large datasets. It is worth noting that, we also tried to do the majority vote among individual masks in one batch to produce the batch-wise mask. Although this is not theoretically grounded, we still achieve comparable results to the B-alternation algorithm. We do not report the numbers here to avoid cluttering the experiments.

Next, we apply the novel algorithm to the large ImageNet dataset. We choose the deep GoogLeNet~\cite{Szegedy15}, VGG-16~\cite{Simonyan14c}, ResNet-18~\cite{he2016deep}, which have higher levels of nonlinearities than just ReLU and more practical values due to their wide use. We followed the DSD (Dense-Sparse-Dense)~\cite{HanPNMTECTD17} training flow. We first train a {\em Dense} network to learn full weight connections. In the {\em Sparse} step of~\cite{HanPNMTECTD17}, the network is regularized by pruning the connections with small weights and finetuning the rest. A final {\em Dense} network is obtained by retraining all connections with the previously pruned weights re-initialized from zero. Our B-alternation algorithm is integrated in the {\em Sparse} step due to the sparse nature of the generated masks. By running our alternation algorithm in this step, we simply prune the connections with zero mask, eliminating the need to set the sparsity degree as in~\cite{HanPNMTECTD17}. We train for the same number of epochs, and do not change any other training settings or hyper-parameters.

Table~\ref{tb8} summarizes the Top-1 error rates of the base network, DSD training and its variants using our alternation algorithms. Surprisingly, our method is able to achieve the best results for all types of network architectures on ImageNet. This suggests that our found support offers better guidance than DSD on reducing the weights redundancy. When compared to the original I-alternation algorithm, the B-alternation variant is found to achieve stronger results. One hypothesis for this is that such batch-wise ``regularization'' can help avoid overfitting and escape noisy local minima by instance-wise optimization. Also, the batch-wise operation reflects the current good practices of CNN training and thus enables parallelization in the future. Furthermore, the B-alternation algorithm has typically low computational overhead, while still enjoying the rapid convergence properties of the I-alternation algorithm (about 30\% fewer iterations than SGD).

\begin{table}[t]
\caption{Image classification results of the two variants of our alternation algorithm.}
\centering
\resizebox{1.0\linewidth}{!}{
\begin{tabular}{!{\vrule width1pt} c|c|c|c !{\vrule width1pt}}
    \Xhline{1pt}
     Error (\%) & Baseline & I-alternation & B-alternation \\ \Xhline{1pt}
     MNIST      & 0.47~\cite{Lin14} & 0.47 & \textbf{0.46} \\
     CIFAR-10 (w/o aug)  & \textbf{9.08}~\cite{SpringenbergDBR14} & 9.12 & 9.10\\
     CIFAR-10 (w/ aug) & 7.25~\cite{SpringenbergDBR14} & \textbf{7.20} & 7.22\\ 
     CIFAR-100   & 33.71~\cite{SpringenbergDBR14} & 33.68 & \textbf{33.63}\\ 
     SVHN   & \textbf{2.35}~\cite{Lin14} & 2.36 & 2.38 \\ \Xhline{1pt}
    \end{tabular}
}
\label{tb7}
\end{table}

\begin{table}[t]
\caption{Top-1 error (\%) of the baseline, SDS~\cite{HanPNMTECTD17} and its variants using our alternation algorithms on ImageNet.}
\centering
\resizebox{0.95\linewidth}{!}{
\begin{tabular}{!{\vrule width1pt} c !{\vrule width1pt} c|c|c !{\vrule width1pt}}
    \Xhline{1pt}
     Network & GoogLeNet & VGG-16 & ResNet-18 \\ \Xhline{1pt}
     Baseline & 31.1 & 31.5 & 30.4 \\
     DSD~\cite{HanPNMTECTD17} & 30.0 & 27.2 & 29.2 \\
     DSD (I-alternation) & 30.2 & 26.8 & 29.0 \\ 
     DSD (B-alternation) & \textbf{29.7} & \textbf{26.4} & \textbf{28.7} \\ \Xhline{1pt}
    \end{tabular}
}
\label{tb8}
\end{table}

\section{Conclusion}
This paper reinterprets CNNs as being composed of weights and support using a modified ReLU-nonlinearity. This motivates us to propose a novel alternation strategy between weights and support for CNN training that leads to substantially faster convergence rates and nice theoretical properties. We further prove that, under certain condition, the global optimal solution to CNN weights can be obtained (through local descent) when the multi-layer support is known. Empirical results support the utility and success of our proposed alternation strategy that achieves state of the art results across large scale ImageNet and other standard benchmarks.

As future work, we plan to explore more efficient alternatives than SGD to solve for the weights during our alternation process. This is motivated by the fact that even though we can provide conditions under which local minima are globally optimal through local descent, we still cannot guarantee this in polynomial time and have room for further speedups. We also remark that our theoretical results actually open up an elegant way to connect deep learning with sparse dictionary learning as well. Furthermore, we will show that the proposed alternation strategy can offer guidance on more applications such as network compression and binarization to facilitate efficient network training and storage.

{\small
\bibliographystyle{ieee}
\bibliography{mybib}
}

\end{document}